\newcommand\blfootnote[1]{%
  \begingroup
  \renewcommand\thefootnote{}\footnote{#1}%
  \addtocounter{footnote}{-1}%
  \endgroup
}
\newcommand{\fig}[1]{Figure~\ref{fig:#1}}
\newcommand{\eq}[1]{(\ref{eq:#1})}
\newcommand{\x}{{\ensuremath{{\mathbf x}}}} 
\newcommand{\z}{{\ensuremath{{\mathbf z}}}} 
\newcommand{\w}{{\ensuremath{{\mathbf w}}}} 
\newcommand{\q}{{\ensuremath{{\mathbf q}}}}
\newcommand{\sX}{{\ensuremath{\cal X}}} 
\newcommand{\sZ}{{\ensuremath{\cal Z}}} 
\newcommand{\dX}{{\ensuremath{X}}} 
\newcommand{\e}{{\ensuremath{e}}} 
\newcommand{\g}{{\ensuremath{g}}} 
\newcommand{\pxz}{{\ensuremath{\psi}}} 
\newcommand{\pzx}{{\ensuremath{\theta}}} 
\newcommand{\loss}{{L}} 
\renewcommand{\P}[1]{{\ensuremath{{#1}}}}
\newcommand{\Pegz}{\P{e(g(Z))}} 
\newcommand{\Pgz}{\P{g(Z)}}
\newcommand{\Pex}{\P{e(X)}}
\newcommand{\Pz}{\P{Z}}
\newcommand{\Px}{\P{X}}
\newcommand{\eofx}[1]{{\e_{\pxz}(#1)}}
\newcommand{\gofz}[1]{{\g_{\pzx}(#1)}}
\newcommand{\sph}{\ensuremath{{{\mathds S}^M}}} 
\newcommand{\E}{{\mathbb{E}}} 
\newcommand{\KL}{{{\text{KL}}}} 
\newtheorem{theorem}{Theorem}
\title{It Takes (Only) Two:\\
Adversarial Generator-Encoder Networks}
\author{
  Dmitry Ulyanov\\
  Skolkovo Institute of Science and Technology, Yandex\\
  \texttt{dmitry.ulyanov@skoltech.ru} \\
  \And
  Andrea Vedaldi \\
  University of Oxford \\
  \texttt{vedaldi@robots.ox.ac.uk} \\
  \And
  Victor Lempitsky  \\
  Skolkovo Institute of Science and Technology \\
  \texttt{lempitsky@skoltech.ru} \\
}
\begin{document} 
\maketitle
\begin{abstract} 
We present a new autoencoder-type architecture that is trainable in an unsupervised mode, sustains both generation and inference, and has the quality of conditional and unconditional samples boosted by adversarial learning. Unlike previous hybrids of autoencoders and adversarial networks, the adversarial game in our approach is set up directly between the encoder and the generator, and no external mappings are trained in the process of learning.
The game objective compares the divergences of each of the real and the generated data distributions with the prior distribution in the latent space. We show that direct generator-vs-encoder game leads to a tight coupling of the two components, resulting in samples and reconstructions of a comparable quality to some recently-proposed more complex architectures. \blfootnote{The source code is available at \url{https://github.com/DmitryUlyanov/AGE}}
\end{abstract} 

\section{Introduction}

Deep (Variational) Auto Encoders (AEs~\cite{Bengio09} and VAEs~\cite{Kingma13,Rezende14}) and deep Generative Adversarial Networks (GANs~\cite{Goodfellow14}) are two of the most popular approaches to generative learning. These methods have complementary strengths and weaknesses. VAEs can learn a \textit{bidirectional} mapping between a complex data distribution and a much simpler prior distribution, allowing both generation and inference; on the contrary, the original formulation of GAN learns a \emph{unidirectional} mapping that only allows sampling the data distribution. On the other hand, GANs use more complex loss functions compared to the simplistic data-fitting losses in (V)AEs and can usually generate more realistic samples.

Several recent works have looked for hybrid approaches to support, in a principled way, both sampling and inference like AEs, while producing samples of quality comparable to GANs. Typically this is achieved by training a AE jointly with one or more adversarial discriminators whose purpose is to improve the alignment of distributions in the latent space~\cite{Brock16,Makhzani15}, the data space~\cite{Che16,Larsen15} or in the joint (product) latent-data space~\cite{Donahue16,Dumoulin16}. Alternatively, the method of~\cite{Zhu16} starts by learning a unidirectional GAN, and then learns a corresponding inverse mapping (the encoder) post-hoc.

While compounding autoencoding and adversarial discrimination does improve GANs and VAEs, it does so at the cost of added complexity. In particular, each of these systems involves at least three deep mappings: an encoder, a decoder/generator, and a discriminator. In this work, we show that this is unnecessary and that the advantages of autoencoders and adversarial training can be combined without increasing the complexity of the model.

In order to do so, we propose a new architecture, called an \emph{Adversarial Generator-Encoder (AGE) Network} (\cref{s:method}), that contains only two feed-forward mappings, the encoder and the generator, operating in opposite directions. As in VAEs, the generator maps a simple prior distribution in latent space to the data space, while the encoder is used to move both the real and generated data samples into the latent space. In this manner, the encoder induces two latent distributions, corresponding respectively to the \textit{encoded real data} and the \textit{encoded generated data}. The AGE learning process then considers the divergence of each of these two distributions to the original prior distribution.

There are two advantages of this approach. First, due to the simplicity of the prior distribution, computing its divergence to the latent data distributions reduces to the calculation of simple statistics over small batches of images. Second, unlike GAN-like approaches, real and generated distributions are never compared directly, thus bypassing the need for discriminator networks as used by GANs. Instead, the adversarial signal in AGE comes from learning the encoder to increase the divergence between the latent distribution of the generated data and the prior, which works against the generator, which tries to decrease the same divergence (\fig{scheme}). Optionally, AGE training may include reconstruction losses typical of AEs.

The AGE approach is evaluated (\cref{s:exp}) on a number of standard image datasets, where we show that the quality of generated samples is comparable to that of GANs~\cite{Goodfellow14,Radford15}, and the quality of reconstructions is comparable or better to that of the more complex Adversarially-Learned Inference (ALI) approach of~\cite{Dumoulin16}, while training faster. We further evaluate the AGE approach in the conditional setting, where we show that it can successfully tackle the colorization problem that is known to be difficult for GAN-based approaches. Our findings are summarized in~\cref{s:conc}.

\textbf{Other related work.} Apart from the above-mentioned approaches, AGE networks can be related to several other recent GAN-based systems. Thus, they are related to improved GANs~\cite{Salimans16} that proposed to use batch-level information in order to prevent mode collapse. The divergences within AGE training are also computed as batch-level statistics. 

Another avenue for improving the stability of GANs has been the replacement of the classifying discriminator with the regression-based one as in energy-based GANs~\cite{Zhao16} and Wasserstein GANs~\cite{Arjovsky17}. Our statistics (the divergence from the prior distribution) can be seen as a very special form of regression. In this way, the encoder in the AGE architecture can be (with some reservations) seen as a discriminator computing a single number similarly to how it is done in \cite{Arjovsky17,Zhao16}.
\section{Adversarial Generator-Encoder Networks}\label{s:method}

\begin{figure}
    \centering
    \includegraphics[width=0.7\columnwidth]{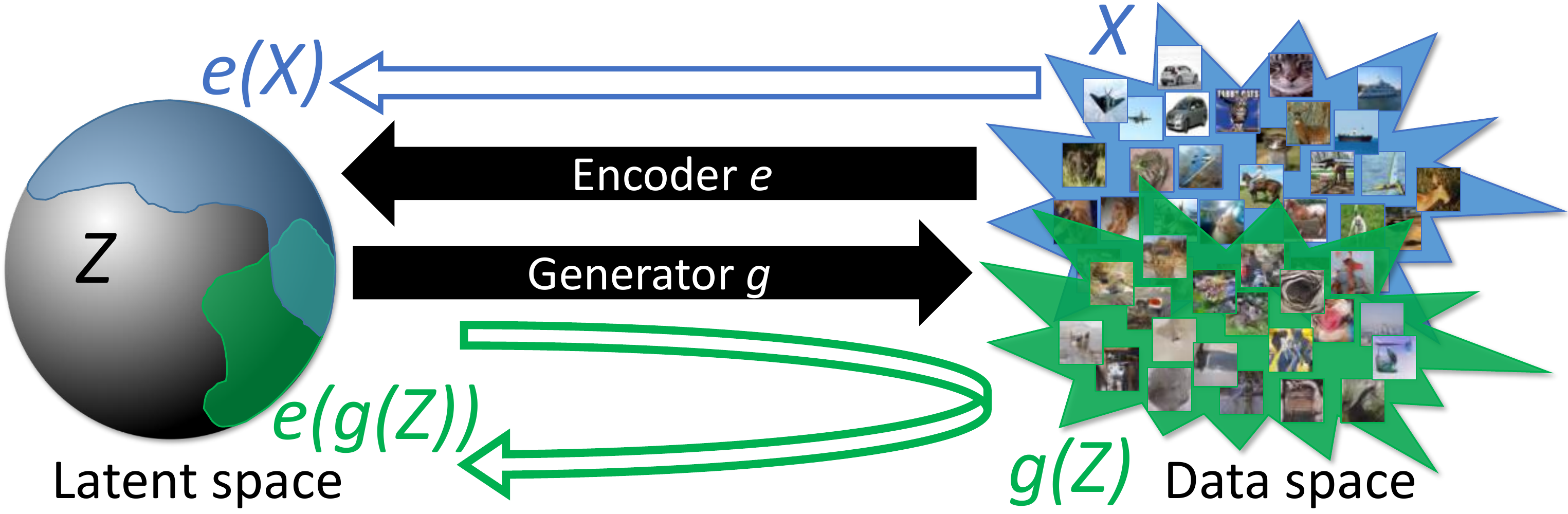}
    \caption{Our model (AGE network) has only two components: the generator $\g$ and the encoder $\e$. The learning process adjusts their parameters in order to align a simple prior distribution $\Pz$ in the latent space and the data distribution $\Px$. This is done by adversarial training, as updates for the generator aim to minimize the divergence between $\Pegz$ and $\Pz$ (aligning green with gray), while updates for the encoder aim to minimize the divergence between $\Pex$ (aligning blue with gray) and to \textit{maximize} the divergence between $\Pegz$ and $\Pz$ (shrink green ``away'' from gray). We demonstrate that such adversarial learning gives rise to high-quality generators that result in the close match between the real distribution $\Px$ and the generated distribution $\Pgz$. Our learning can also incorporate reconstruction losses to ensure that encoder-generator acts as autoencoder (section \ref{sec:reconstruction}). }
    \label{fig:scheme}
\end{figure}

This section introduces our Adversarial Generator-Encoder (AGE) networks. An AGE is composed of two parametric mappings: the \textit{encoder} $\eofx{\x}$, with the learnable parameters $\pxz$, that maps the data space \sX{} to the latent space \sZ{}, and the \textit{generator} $\gofz{\z}$, with the learnable parameters \pzx{}, which runs in the opposite direction. We will use the shorthand notation $f(Y)$ to denote the distribution of the random variable $f(\mathbf{y}),$ $\mathbf{y}\sim Y$.

The reference distribution $\Pz$ is chosen so that it is easy to sample from it, which in turns allow to sample $\gofz{Z}$ unconditionally be first sampling $\z \sim \Pz$ and then by feed-forward evaluation of $\x = \gofz{\z}$, exactly as it is done in GANs. In our experiments, we pick the latent space \sZ{} to be an $M$-dimensional sphere \sph, and the latent distribution to be a uniform distribution on that sphere $\Pz = {\text{Uniform}}(\sph)$. We have also conducted some experiments with the unit Gaussian distribution in the Euclidean space and have obtained results comparable in quality.

The goal of learning an AGE is to align the real data distribution $\Px$ to the generated distribution $\gofz{Z}$ while establishing a correspondence between data and latent samples $\x$ and $\z$. The real data distribution $\Px$ is empirical and represented by a large number $N$ of data samples $\{\x_1,\x_2,...\x_N\}$. Learning amounts to tuning the parameter \pxz{} and \pzx{} to optimize the AGE criterion, discussed in~\cref{sec:alignment}. This criterion is based on an adversarial game whose saddle points correspond to networks that align real and generated data distribution ($\Pgz = \Px$). The criterion is augmented with additional terms that encourage the reciprocity of the encoder $\e$ and the generator $\g$~(\cref{sec:reconstruction}). The details of the training procedure are given in~\cref{sec:training}.

\subsection{Adversarial distribution alignment}\label{sec:alignment}

The GAN approach to aligning two distributions is to define an adversarial game based on a ratio of probabilities~\cite{Goodfellow14}. The ratio is estimated by repeatedly fitting a binary classifier that distinguishes between samples obtained from the real and generated data distributions. Here, we propose an alternative adversarial setup with some advantages with respect to GAN's, including avoiding generator collapse~\cite{Goodfellow17}.

The goal of AGE is to generate a distribution $\Pgz$ in data space that is close to the true data distribution $\Px$. However, direct matching of the distributions in the high-dimensional data space, as done in GAN, can be challenging. We propose instead to move this comparison \emph{to the simpler latent space}. This is done by introducing a divergence measure $\Delta(P \| Q)$ between distributions defined in the latent space $\sZ$. We only require this divergence to be non-negative and zero if, and only if, the distributions are identical ($\Delta(P \| Q) = 0  \Longleftrightarrow P=Q$).\footnote{We do not require the divergence to be a distance.} The encoder function $\e$ maps the distributions $\Px$ and $\Pgz$ defined in data space to corresponding distributions $\Pex$ and $\Pegz$ in the latent space. Below, we show how to design an adversarial criterion such that minimizing the divergence $\Delta(\Pex,\Pegz)$ in latent space induces the distributions $\Px$ and $\Pgz$ to align in data space as well.

In the theoretical analysis below, we assume that encoders and decoders span the class of all measurable mappings between the corresponding spaces. This assumption, often referred to as \textit{non-parametric limit}, is justified by the universality of neural networks \cite{Hornik1989359}. We further make the \textbf{assumption} that there exists at least one ``perfect'' generator that matches the data distribution, i.e.\ $\exists \g_0: \g_0(Z)=X$. 

We start by considering a simple game with objective defined as:  
\begin{equation}\label{eq:game1}
\max_e \min_g V_1(g,e) = \Delta(\,\Pegz \| \Pex\,)\, .
\end{equation}
As the following theorem shows, perfect generators form saddle points (Nash equilibria) of the game~\eq{game1} and all saddle points of the game \eq{game1} are based on perfect generators.

\begin{theorem}\label{th:game1}
A pair $(\g^*,\e^*)$ forms a saddle point of the game \eq{game1} if and only if the generator $\g^*$ matches the data distribution, i.e.\ $\g^*(Z)=X$. 
\end{theorem}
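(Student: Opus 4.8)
The plan is to prove the two directions of the ``if and only if'' separately, exploiting the non-parametric limit and the defining property of $\Delta$.

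\textbf{Sufficiency (perfect generator $\Rightarrow$ saddle point).} Suppose $\g^*(Z)=X$. Then for \emph{any} encoder $e$ we have $e(\g^*(Z)) = e(X)$ as distributions, so $V_1(\g^*,e) = \Delta(\Pegz \,\|\, \Pex) = 0$ by the identity-of-indiscernibles property of $\Delta$. Since $\Delta \ge 0$ always, $\g^*$ is a global minimizer of $g \mapsto V_1(g,e^*)$ (attaining the value $0$) regardless of $e^*$, and simultaneously every encoder $e$ attains the same value $0 = \max_e V_1(\g^*,e)$, so $e^*$ is a maximizer of $e \mapsto V_1(\g^*,e)$. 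Hence $(\g^*,e^*)$ satisfies the saddle-point inequalities $V_1(\g^*,e) \le V_1(\g^*,e^*) \le V_1(g,e^*)$ for all $g,e$ (all three quantities equal $0$), so it is a saddle point. Note this shows \emph{any} $e^*$ works, which is worth stating explicitly.

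\textbf{Necessity (saddle point $\Rightarrow$ perfect generator).} Let $(\g^*,e^*)$ be a saddle point. First I would compute the saddle value. By the assumption there exists a perfect generator $\g_0$ with $\g_0(Z)=X$; from the first part, $V_1(\g_0, e^*) = 0$. The saddle inequality $V_1(\g^*,e^*) \le V_1(g,e^*)$ for all $g$, applied at $g=\g_0$, gives $V_1(\g^*,e^*) \le 0$, and nonnegativity of $\Delta$ gives $V_1(\g^*,e^*) \ge 0$; hence $V_1(\g^*,e^*) = 0$, i.e.\ $e^*(\g^*(Z)) = e^*(X)$ as distributions in $\sZ$. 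This alone does not yet force $\g^*(Z) = X$ in data space, because $e^*$ could be non-injective and collapse the two data distributions to the same latent law. To rule this out I would use the \emph{other} saddle inequality, $V_1(\g^*,e) \le V_1(\g^*,e^*) = 0$ for all encoders $e$ — but since $\Delta \ge 0$ this says $\Delta(e(\g^*(Z)) \,\|\, e(X)) = 0$, i.e.\ $e(\g^*(Z)) = e(X)$, for \textbf{every} measurable $e: \sX \to \sZ$. Taking $e$ to range over a separating family of measurable functions (e.g.\ pushforwards that can isolate any measurable set, or bounded continuous test functions realized as coordinates of some measurable map into $\sZ$) forces the data-space distributions $\g^*(Z)$ and $X$ to agree on all measurable sets, hence $\g^*(Z) = X$.

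\textbf{Main obstacle.} The delicate step is the last one: converting ``$e(\g^*(Z)) = e(X)$ for all measurable $e$'' into ``$\g^*(Z) = X$.'' This requires a separating/point-distinguishing argument in the data space and a mild measure-theoretic setup (e.g.\ $\sX$ a standard Borel space so that equality of all one-dimensional measurable pushforwards, or of all expectations of bounded measurable test functions, implies equality of the measures), and it relies on $\sZ$ being rich enough to host such test maps — which $\sph$ with $M \ge 1$ is. I would also make sure the non-parametric-limit hypothesis is invoked cleanly: it is what licenses quantifying over \emph{all} measurable $e$ in the maximization and over all $g$ (including $\g_0$) in the minimization. Everything else is a short consequence of $\Delta \ge 0$ and $\Delta(P\|Q)=0 \iff P=Q$.
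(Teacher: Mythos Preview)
Your proof is correct and follows essentially the same route as the paper's: establish that the saddle value is $0$, deduce $V_1(\g^*,e)=0$ for \emph{every} measurable $e$, and then invoke the separating property (the paper isolates this as a standalone lemma stating that $e(X)=e(Y)$ for all measurable $e$ implies $X=Y$). The only cosmetic difference is that the paper obtains the saddle value $0$ by citing a general game-theoretic lemma on equality of Nash-equilibrium values, whereas you compute it directly from the saddle inequalities and the assumed existence of a perfect generator $\g_0$; your route is slightly more self-contained.
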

The proofs of this and the following theorems are given in the supplementary material.

While the game~\eq{game1} is sufficient for aligning distributions in the data space, finding such saddle points is difficult due to the need of comparing two empirical (hence non-parametric) distributions $\Pex$ and $\Pegz$. We can avoid this issue by introducing an intermediate reference distribution $Y$ and comparing the distributions to that instead, resulting in the game:
\begin{equation}\label{eq:game2}
\max_e \min_g V_2(\g,\e) = \Delta(\Pegz  \| Y) - \Delta(\Pex \| Y).  
\end{equation}
Importantly,~\eq{game2} still induces alignment of real and generated distributions in data space:
\begin{theorem}\label{th:game2}
If a pair $(\g^*,\e^*)$ is a saddle point of game \eq{game2} then the generator $\g^*$ matches the data distribution, i.e.\ $g^*(Z) = X$. Conversely, if the generator $\g^*$ matches the data distribution, then for \textit{some} $\e^*$ the pair $(\g^*,\e^*)$ is a saddle point of \eq{game2}.  
\end{theorem}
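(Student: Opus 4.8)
The plan is to prove the two implications separately, in each case reducing the saddle-point inequalities to elementary properties of $\Delta$ plus one pushforward construction, and using throughout the standing assumption that some perfect generator $\g_0$ with $\g_0(Z)=X$ exists.

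\emph{Converse direction.} Assume $\g^*(Z)=X$. I would first use the non-parametric limit — together with the standard fact that, on standard Borel spaces, a non-atomic distribution can be pushed by a measurable map onto any prescribed target — to choose an encoder $\e^*$ with $\e^*(X)=Y$. Then both halves of the saddle condition are immediate. For the encoder: since $\g^*(Z)=X$, we have $\e(\g^*(Z))=\e(X)$ for \emph{every} $\e$, hence $V_2(\g^*,\e)=\Delta(\e(X)\|Y)-\Delta(\e(X)\|Y)=0$, so $\e^*$ maximizes $V_2(\g^*,\cdot)$ with value $0$. For the generator: $\Delta(\e^*(X)\|Y)=0$, so $V_2(\g,\e^*)=\Delta(\e^*(\g(Z))\|Y)\ge 0=V_2(\g^*,\e^*)$ for all $\g$, so $\g^*$ minimizes $V_2(\cdot,\e^*)$. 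These two facts are exactly the saddle-point inequalities for $(\g^*,\e^*)$.

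\emph{Forward direction.} Assume $(\g^*,\e^*)$ is a saddle point and argue by contradiction, supposing $\g^*(Z)\neq X$. The min half of the saddle condition, applied to $\g_0$, gives $V_2(\g^*,\e^*)\le V_2(\g_0,\e^*)=\Delta(\e^*(X)\|Y)-\Delta(\e^*(X)\|Y)=0$; the max half gives $V_2(\g^*,\e^*)\ge V_2(\g^*,\e)$ for every $\e$. So it suffices to exhibit one encoder $\e$ with $V_2(\g^*,\e)>0$: this contradicts the two inequalities and forces $\g^*(Z)=X$.

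\emph{The crux.} The one nontrivial ingredient — and the step I expect to be the main obstacle — is the following claim about pushforwards: if $P:=\g^*(Z)$ and $Q:=X$ are distinct distributions on $\sX$ (with both $Q$ and $Y$ non-atomic), then there is a measurable $\e$ with $\e(Q)=Y$ and $\e(P)\neq Y$; for such an $\e$, $V_2(\g^*,\e)=\Delta(\e(P)\|Y)-\Delta(\e(Q)\|Y)=\Delta(\e(P)\|Y)>0$ by the defining property of $\Delta$. I would establish this by its contrapositive: if \emph{every} measurable $\e$ with $\e(Q)=Y$ also satisfies $\e(P)=Y$, then $P=Q$. Given $A\subseteq\sX$ with $Q(A)=t\in(0,1)$, pick $B\subseteq\sZ$ with $Y(B)=t$ and construct $\e$ mapping $A$ into $B$ while pushing the normalized restriction $Q|_A$ onto $Y|_B$, and mapping $A^c$ into $B^c$ while pushing $Q|_{A^c}$ onto $Y|_{B^c}$ — possible since $Q$ is non-atomic and $Y$ has subsets of every measure. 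Then $\e^{-1}(B)=A$, so the hypothesis $\e(P)=Y$ gives $P(A)=Y(B)=t=Q(A)$; a short union/complement argument extends $P(A)=Q(A)$ to sets of $Q$-measure $0$ and $1$, whence $P=Q$ — a contradiction. Combining this with the two reductions above proves both directions.
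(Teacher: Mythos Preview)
Your proposal is correct, and the converse direction is essentially the paper's argument made fully explicit (the paper is terse there: it notes $V_2(\g^*,\e)=0$ for all $\e$ and then asserts an $\e^*$ exists for which $\g^*$ minimizes, without spelling out the choice $\e^*(X)=Y$ that you make).

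The forward direction is where you genuinely diverge. The paper first pins the saddle value at $0$ (via the general fact that all equilibria share a value), then takes a \emph{single} encoder $\e'$ that is almost-everywhere invertible and satisfies $\e'(X)=Y$; the saddle inequality forces $\Delta(\e'(\g^*(Z))\|Y)=0$, hence $\e'(\g^*(Z))=\e'(X)$, and a short invertibility lemma (``if $\e$ is a.e.\ invertible and $\e(\g(Z))=\e(X)$ then $\g(Z)=X$'') finishes. Your route instead avoids invertibility entirely: you build, for each measurable $A$ with $Q(A)\in(0,1)$, a tailored encoder $\e$ with $\e^{-1}(B)=A$ and $\e(Q)=Y$, and use the family of such encoders to force $P(A)=Q(A)$ set by set. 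This is more hands-on measure theory but buys you something: you never need to invoke an invertible transport between $X$ and $Y$, only the weaker fact that non-atomic measures on standard Borel spaces can be pushed onto prescribed targets. The paper's approach is shorter once its invertibility lemma is in hand; yours is self-contained. One small remark: your extension to $Q(A)\in\{0,1\}$ is fine, but make the splitting explicit (partition $A^c$ into two pieces of intermediate $Q$-measure and sum) rather than leaving it as ``a short union/complement argument''.
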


The important benefit of formulation~\eq{game2} is that, if $Y$ is selected in a suitable manner, it is simple to compute the divergence of $Y$ to the empirical distributions $e(g(Z))$ and $e(X)$. For convenience, in particular, we choose $Y$ to coincide with the ``canonical'' (prior) distribution $Z$. By substituting $Y=Z$ in objective~\eq{game2}, the loss can be extended to include reconstruction terms that can improve the quality of the result. It can also be optimized by using stochastic approximations as described in section \ref{sec:training}. 

Given a distribution $Q$ in data space, the encoder $e$ and divergence $\Delta(\cdot\|Y)$ can be interpreted as extracting statistics $F(Q) = \Delta(\e(Q)\|Y)$ from $Q$. Hence, game~\eq{game2} can be though of as comparing certain statistics of the real and generated data distributions. Similarly to GANs, these statistics are not fixed but evolve during learning.

We also note that, even away from the saddle point, the minimization $\min_\g V_2(\g, \e)$ for a fixed $\e$ does not tend to collapse for many reasonable choice of divergence (e.g.\ KL-divergence). In fact, any collapsed distribution would inevitably lead to a very high value of the first term in~\eq{game2}. Thus, unlike GANs, our approach can optimize the generator for a fixed adversary till convergence and obtain a non-degenerate solution. On the other hand, the maximization $\max_e V_2(\g,\e)$ for some fixed $\g$ can lead to $+\infty$ score for some divergences.

\subsection{Encoder-generator reciprocity and reconstruction losses}\label{sec:reconstruction}

In the previous section we have demonstrated that finding a saddle point of~\eq{game2} is sufficient to align real and generated data distributions $\Px$ and $\Pgz$ and thus generate realistically-looking data samples. At the same time, this by itself does not necessarily imply that mappings $\e$ and $\g$ are reciprocal. Reciprocity, however, can be desirable if one wishes to reconstruct samples  $\x=g(\z)$ from their codes $\z=e(\x)$.

In this section, we introduce losses that encourage encoder and generator to be reciprocal. Reciprocity can be measured either in the latent space or in the data space, resulting in the loss functions based on reconstruction errors, e.g.:
\begin{eqnarray}
\loss_\sX(\g_\pzx,\e_\pxz) = \E_{\x \sim \Px} \| \x - \g_\pzx \left(\strut\e_\pxz(\x)\right) \|_1 \label{eq:dataloss}\,, \\
\loss_\sZ(\g_\pzx,\e_\pxz) = \E_{\z \sim \Pz} \| \z - \e_\pxz\left(\strut\g_\pzx(\z)\right) \|^2_2 \,. \label{eq:latentloss}
\end{eqnarray}
Both losses \eq{dataloss} and \eq{latentloss} thus encourage the reciprocity of the two mappings. Note also that \eq{dataloss} is the traditional pixelwise loss used within AEs (L1-loss was preferred, as it is known to perform better in image synthesis tasks with deep architectures).

A natural question then is whether it is helpful to minimize both losses \eq{dataloss} and \eq{latentloss} at the same time or whether considering only one is sufficient. The answer is given by the following statement:

\begin{theorem}\label{th:th3}
Let the two distributions $W$ and $Q$ be aligned by the mapping $f$ (i.e.\ $f(W) = Q$) and let $\E_{\w \sim W} \| \w - h \left(\strut f(\w)\right) \|_2^2 = 0$. Then, for $\w \sim W$ and $\q \sim Q$, we have $\w = h(f(\w))$ and $\q = f(h(\q))$ almost certainly, i.e.\ the mappings $f$ and $h$ invert each other almost everywhere on the supports of $W$ and $Q$. Furthermore, $Q$ is aligned with $W$ by $h$, i.e.\ $h(Q) = W$.
\end{theorem}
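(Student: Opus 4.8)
The plan is to derive all three conclusions from the single hypothesis $\E_{\w \sim W} \| \w - h(f(\w)) \|_2^2 = 0$ together with the given pushforward identity $f(W) = Q$; no further structure is needed. First I would observe that the integrand $\|\w - h(f(\w))\|_2^2$ is non-negative, so its expectation under $W$ can vanish only if the integrand is zero for $W$-almost every $\w$. Hence $\w = h(f(\w))$ holds $W$-almost surely, which is already the first of the two reciprocity statements.

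Next, to obtain $h(Q) = W$, I would check equality of these two distributions by testing against an arbitrary bounded measurable $\phi$. Using $Q = f(W)$ to change variables, $\E_{\q \sim Q}[\phi(h(\q))] = \E_{\w \sim W}[\phi(h(f(\w)))]$, and by the previous step $h(f(\w)) = \w$ for $W$-a.e.\ $\w$, so the right-hand side equals $\E_{\w \sim W}[\phi(\w)]$. As $\phi$ was arbitrary, $h(Q) = W$; together with the hypothesis $f(W) = Q$, this says that $f$ and $h$ act as mutually inverse pushforward maps between $W$ and $Q$.

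For the remaining statement, $\q = f(h(\q))$ for $Q$-a.e.\ $\q$, I would apply $f$ to both sides of $\w = h(f(\w))$ to get $f(\w) = f(h(f(\w)))$ for $W$-a.e.\ $\w$. Writing $A$ for the full-$W$-measure set on which $\w = h(f(\w))$ and $B = \{\, z : z = f(h(z)) \,\}$ for the coincidence set in the target space, the previous identity gives $f(A) \subseteq B$, hence $A \subseteq f^{-1}(B)$ and therefore $Q(B) = W(f^{-1}(B)) \ge W(A) = 1$. Thus $f(h(\q)) = \q$ for $Q$-a.e.\ $\q$, completing the claim that $f$ and $h$ invert each other on the supports of $W$ and $Q$.

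The only step that requires more than routine bookkeeping is this last transfer of an almost-sure statement from $W$ to $Q$: it relies on measurability of the set $B$ (the preimage of the diagonal under $z \mapsto (z, f(h(z)))$) and of $f$, together with $Q = f(W)$. Both are guaranteed by the standing non-parametric assumption that all the mappings involved are measurable, so I do not expect a genuine obstacle; the whole argument is a short chain of pushforward manipulations anchored on the almost-sure identity $h \circ f = \mathrm{id}$ on the support of $W$.
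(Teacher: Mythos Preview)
Your argument is correct and follows essentially the same route as the paper: deduce $h\circ f=\mathrm{id}$ $W$-a.s.\ from the vanishing of a non-negative integrand, then transfer this via the pushforward $Q=f(W)$ to get the other two conclusions. The only cosmetic differences are that the paper proves $\q=f(h(\q))$ first by directly computing $\E_{\q\sim Q}\|\q-f(h(\q))\|^2=\E_{\w\sim W}\|f(\w)-f(h(f(\w)))\|^2=0$ (rather than your set-containment argument), and establishes $h(Q)=W$ afterward; your test-function verification of $h(Q)=W$ is arguably cleaner than the paper's set-theoretic one.
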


Recall that Theorem~\ref{th:game2} establishes that the solution (saddle point) of game~\eq{game2} aligns distributions in the data space. Then Theorem~\ref{th:th3} shows that when augmented with the latent space loss~\eq{latentloss},  the objective \eq{game2} is sufficient to ensure reciprocity.

\subsection{Training AGE networks}\label{sec:training} 

Based on the theoretical analysis derived in the previous subsections, we now suggest the approach to the joint training of the generator in the encoder within the AGE networks. As in the case of GAN training, we set up the learning process for an AGE network as a game with the  iterative updates over the parameters $\pzx$ and $\pxz$ that are driven by the optimization of different objectives. In general, the optimization process combines the maximin game for the functional \eq{game2} with the optimization of the reciprocity losses \eq{dataloss} and \eq{latentloss}. 

In particular, we use the following game objectives for the generator and the encoder:
\begin{eqnarray}
\hat{\pzx} = \arg \min_\pzx \left( V_2(\g_\pzx,\e_{\bar{\pxz}}) + \lambda \loss_\sZ(\g_\pzx,\e_{\bar{\pxz}})  \right) \,, \label{eq:pzxopt} \\
\hat{\pxz} = \arg \max_\pxz \left( V_2(\g_{\bar{\pzx}},\e_\pxz) - \mu \loss_\sX(\g_{\bar{\pzx}},\e_\pxz)  \right) \,, \label{eq:pxzopt}
\end{eqnarray}
where $\bar{\pxz}$ and $\bar{\pzx}$ denote the value of the encoder and generator parameters at the moment of the optimization and $\lambda$, $\mu$ is a user-defined parameter. Note that both objectives \eq{pzxopt}, \eq{pxzopt} include only one of the reconstruction losses. Specifically, the generator objective includes only the latent space reconstruction loss. In the experiments, we found that the omission of the other reconstruction loss (in the data space) is important to avoid possible blurring of the generator outputs that is characteristic to autoencoders. Similarly to GANs, in \eq{pzxopt}, \eq{pxzopt} we perform only several steps toward optimum at each iteration, thus alternating between generator and encoder updates.

By maximizing the difference between $\Delta(\e_\pxz(\g_{\bar {\pzx}}(\Pz))\|\Pz)$ and $\Delta(\e_\pxz(\dX)\|\Pz)$, the optimization process \eq{pxzopt} focuses on the maximization of the mismatch between the real data distribution $\Px$ and the distribution of the samples from the generator $\g_{\bar {\pzx}}(\Pz)$. Informally speaking, the optimization \eq{pxzopt} forces the encoder to find the mapping that aligns real data distribution $\Px$ with the target distribution $\Pz$, while mapping non-real (synthesized data) $\g_{\bar {\pzx}}(\Pz)$ away from $\Pz$. When $\Pz$ is a uniform distribution on a sphere, the goal of the encoder would be to uniformly spread the real data over the sphere, while cramping as much of synthesized data as possible together assuring non-uniformity of the distribution $\e_\pxz\left(\strut\g_{\bar{\pzx}}(Z)\right)$.

Any differences (misalignment) between the two distributions are thus amplified by the optimization process \eq{pxzopt} and force the optimization process \eq{pzxopt} to focus specifically on removing these differences. Since the misalignment between $\Px$ and $\g(\Pz)$ is measured after projecting the two distributions into the latent space, the maximization of this misalignment makes the encoder to compute features that distinguish the two distributions.

\section{Experiments}\label{s:exp}

\begin{figure*}
    \centering
    \begin{subfigure}[b]{0.24\linewidth}
    \centering
        \includegraphics[width=1\linewidth]{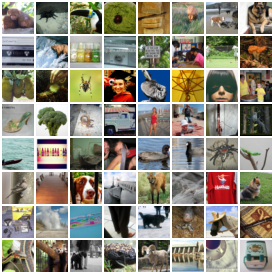}
        \caption{Real images}
    \end{subfigure}
    \begin{subfigure}[b]{0.24\linewidth}
    \centering
        \includegraphics[width=1\linewidth]{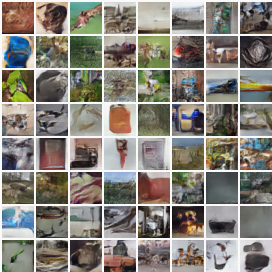}
        \caption{AGE samples}
    \end{subfigure}
    \begin{subfigure}[b]{0.24\linewidth}
    \centering
        \includegraphics[width=1\linewidth]{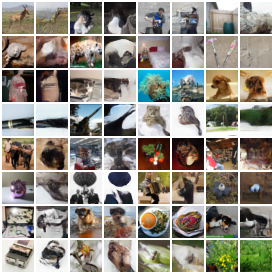}
        \caption{[Real, AGE reconstr.]}
    \end{subfigure}
    \begin{subfigure}[b]{0.24\linewidth}
    \centering
        \includegraphics[width=1\linewidth]{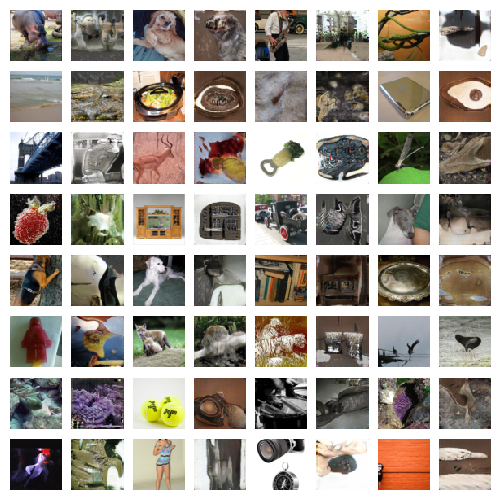}
        \caption{[Real, ALI reconstr.]}
    \end{subfigure}\\
    \begin{subfigure}[b]{0.24\linewidth}
    \centering
        \includegraphics[width=1\linewidth]{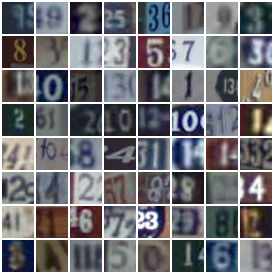}
    \end{subfigure}
    \begin{subfigure}[b]{0.24\linewidth}
    \centering
        \includegraphics[width=1\linewidth]{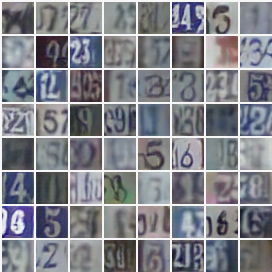}
    \end{subfigure}
    \begin{subfigure}[b]{0.24\linewidth}
    \centering
        \includegraphics[width=1\linewidth]{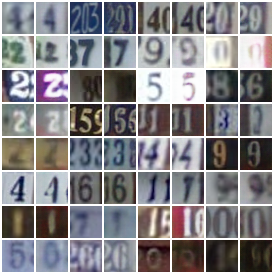}
    \end{subfigure}
    \begin{subfigure}[b]{0.24\linewidth}
    \centering
      \includegraphics[width=1\linewidth]{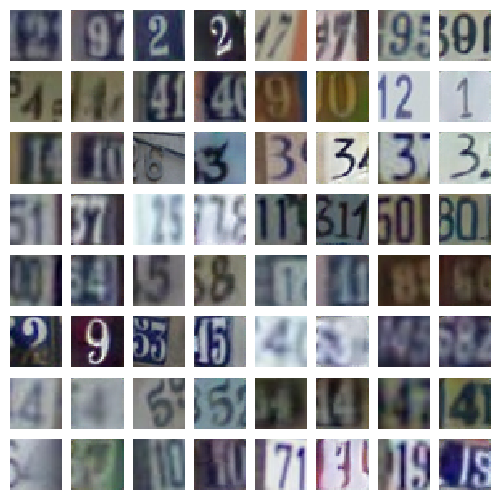}
    \end{subfigure}
 
    \caption{Samples (b) and reconstructions (c) for Tiny ImageNet dataset (top) and SVHN dataset (bottom). The results of ALI~\cite{Dumoulin16} on the same datasets are shown in (d). In (c,d)\, odd columns show real examples and even columns show their reconstructions. Qualitatively, our method seems to obtain more accurate reconstructions than ALI~\cite{Dumoulin16}, especially on the Tiny ImageNet dataset, while having samples of similar visual quality. }\label{fig:imagenet_svhn}
\end{figure*}

We have validated AGE networks in two settings. A more traditional setting involves \textit{unconditional} generation and reconstruction, where we consider a number of standard image datasets. We have also evaluated AGE networks in the \textit{conditional} setting. Here, we tackle the problem of image colorization, which is hard for GANs. In this setting, we condition both the generator and the encoder on the gray-scale image. Taken together, our experiments demonstrate the versatility of the AGE approach.

\subsection{Unconditionally-trained AGE networks}

{\bf Network architectures:} In our experiments, the generator and the encoder networks have a similar structure to the generator and the discriminator in DCGAN~\cite{Radford15}. To turn the discriminator into the encoder, we have modified it to output an $M$-dimensional vector and replaced the final sigmoid layer with the normalization layer that projects the points onto the sphere. 

\textbf{Divergence measure:} As we need to measure the divergence between the empirical distribution and the prior distribution in the latent space, we have used the following measure. Given a set of samples on the $M$-dimensional sphere, we fit the Gaussian Normal distribution with diagonal covariance matrix in the embedding $M$-dimensional space and we compute the KL-divergence of such Gaussian with the unit Gaussian as 
\begin{equation}
\KL(Q\|\mathcal{N}(0,I)) = - \frac{M}{2} + \frac{1}{M}\sum_{j=1}^M  \frac{s_j^2 + m_j^2}{2} - \log(s_j)\,
\end{equation}
where $m_j$ and $s_j$ are the means and the standard deviations of the fitted Gaussians along various dimensions. Since the uniform distribution on the sphere will entail the lowest possible divergence with the unit Gaussian in the embedding space among all distributions on the unit sphere, such divergence measure is valid for our analysis above. We have also tried to measure the same divergence non-parametrically using Kozachenko-Leonenko estimator~\cite{Kozachenko87}. In our initial experiments, both versions worked equally well, and we used a simpler parametric estimator in the presented experiments.

\textbf{Hyper-parameters:} We use ADAM~\cite{Kingma14} optimizer with the learning rate of $0.0002$. We perform two generator updates per one encoder update for all datasets. For each dataset we tried $\lambda \in \{500, 1000, 2000\}$ and picked the best one. We ended up using $\mu=10$ for all datasets. The dimensionality $M$ of the latent space was manually set according to the complexity of the dataset. We thus used $M=64$ for CelebA and SVHN datasets, and $M=128$ for the more complex datasets of Tiny ImageNet and CIFAR-10.

\textbf{Results:} We evaluate unconditional AGE networks on several standard datasets, while treating the system \cite{Dumoulin16} as the most natural reference for comparison (as the closest three-component counterpart to our two-component system). The results for \cite{Dumoulin16} are either reproduced with the author's code  or copied from \cite{Dumoulin16}.

In \fig{imagenet_svhn}, we present the results on the challenging Tiny ImageNet dataset~\cite{RussakovskyDSKS15} and the SVHN dataset~\cite{Netzer}. We show both samples $\g(\z)$ obtained for $\z \sim Z$ as well as the reconstructions $\g \left(\strut\e(\x)\right)$ alongside the real data samples $\x$. We also show the reconstructions obtained by \cite{Dumoulin16} for comparison. Inspection reveals that the fidelity of \cite{Dumoulin16} is considerably lower for Tiny ImageNet dataset.

\begin{figure*}
    \centering
    \includegraphics[width=1\linewidth]{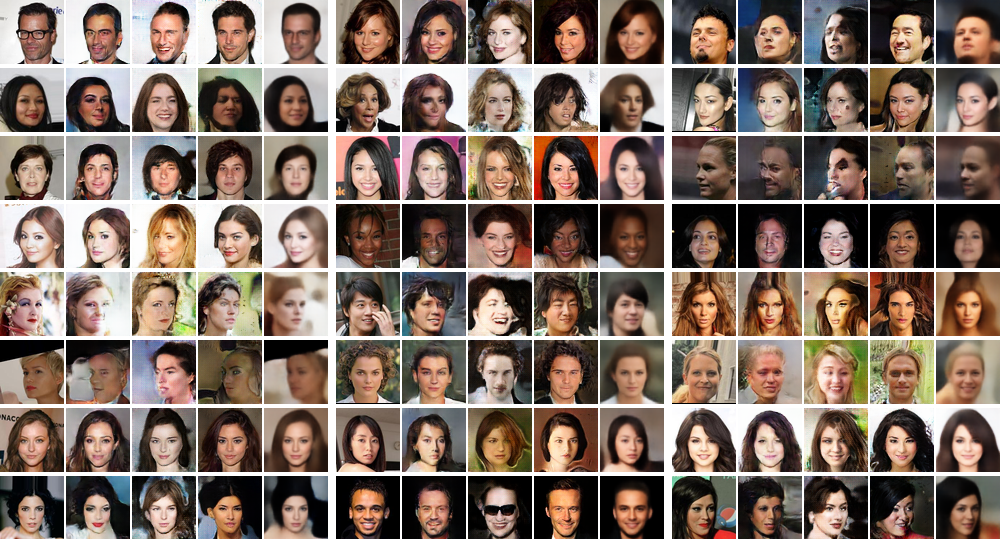}
\newcolumntype{A}{>{\centering\arraybackslash}p{0.0655\textwidth}}
\newcolumntype{B}{>{\centering\arraybackslash}p{0.0760\textwidth}}
\setlength{\tabcolsep}{0pt}

    \vspace*{-5mm}\begin{tabularx}{\textwidth}{AAAAABAAAABAAAA}
    \tiny{Orig.} &
    \tiny{AGE 10~ep.} &
    \tiny{ALI\, 10~ep.} &
    \tiny{ALI 100~ep.} &
    \tiny{VAE} &
    \tiny{Orig.} &
    \tiny{AGE 10~ep.} &
    \tiny{ALI\, 10~ep.} &
    \tiny{ALI 100~ep.} &
    \tiny{VAE} &
    \tiny{Orig.} &
    \tiny{AGE 10~ep.} &
    \tiny{ALI\, 10~ep.} &
    \tiny{ALI 100~ep.} &
    \tiny{VAE} 
    \end{tabularx}
    
    \caption{Reconstruction quality comparison of our method with ALI \cite{Dumoulin16} and VAE \cite{Kingma13}. The first column in each set shows examples from the test set of  CelebA dataset. In the other columns the reconstructions for different methods are presented: column two for ours method, three and four for ALI and five for VAE. We train our model for $10$ epochs and compare to ALI, trained for the same number of epochs (column three). Importantly one epoch for our method takes $3$ times less time than for ALI. For a fair comparison we also present the results of ALI, trained till convergence.}\label{fig:celeba_reconstructions} 
\end{figure*}

In \fig{celeba_reconstructions}, we further compare the reconstructions of CelebA~\cite{LiuLWT15} images obtained by the AGE network, ALI \cite{Dumoulin16}, and VAE~\cite{Kingma13}. Overall, the fidelity and the visual quality of AGE reconstructions are roughly comparable or better than ALI. Furthermore, ALI takes notoriously longer time to converge than our method, and the reconstructions of ALI after 10 epochs (which take six hours) of training look considerably worse than AGE reconstructions after 10 epochs (which take only two hours), thus attesting to the benefits of having a simpler two-component system.

Next we evaluate our method quantitatively. For the model trained on CIFAR-10 dataset we compute Inception score \cite{Salimans16}. The AGE score is $5.90 \pm 0.04$, which is higher than the ALI~\cite{Dumoulin16} score of $5.34 \pm 0.05$ (as reported in \cite{WardeFarley17}) and than the score of $4.36 \pm 0.04$ from \cite{Salimans16}. The state-of-the-art from \cite{WardeFarley17} is higher still ($7.72 \pm 0.13$). Qualitative results of AGE for CIFAR-10 and other datasets are shown in supplementary material.

We also computed log likelihood for AGE and ALI on the MNIST dataset using the method of \cite{Wu16} with latent space of size $10$ using authours source code. ALI’s score is $721$ while AGE’s score is $746$. The AGE model is also superior than both VAE and GAN, which scores are $705.375$ and $346.679$ respectively as evaluated by \cite{Wu16}.

Finally, similarly to \cite{Dumoulin16, Donahue16, Radford15} we investigated whether the learned features are useful for discriminative tasks. We reproduced the evaluation pipeline from \cite{Dumoulin16} for SVHN dataset and obtained $23.7\%$ error rate in the unsupervised feature learning protocol with our model, while their result is $19.14\%$. At the moment, it is unclear to us why AGE networks underperform ALI at this task.

\subsection{Conditional AGE network experiments.}

Recently, several GAN-based systems have achieved very impressive results in the conditional setting, where the latent space is augmented or replaced with a second data space corresponding to different modality~\cite{Isola16,Zhu17}. Arguably, it is in the conditional setting where the bi-directionality lacking in conventional GANs is most needed. In fact, by allowing to switch back-and-forth between the data space and the latent space, bi-directionality allows powerful neural image editing interfaces~\cite{Zhu16,Brock16}.

Here, we demonstrate that AGE networks perform well in the conditional setting. To show that, we have picked the image colorization problem, which is known to be hard for GANs. To the best of our knowledge, while the idea of applying GANs to the colorization task seems very natural, the only successful GAN-based colorization results were presented in \cite{Isola16}, and we compare to the authors' implementation of their pix2pix system. We are also aware of several unsuccessful efforts to use GANs for colorization.

To use AGE for colorization, we work with images in the \textit{Lab} color space, and we treat the \textit{ab} color channels of an image as a data sample $\x$. We then use the lightness channel $L$ of the image as an input to both the encoder $\eofx{\x|L}$ and the generator $\gofz{\z|L}$, effectively conditioning the encoder and the generator on it. Thus, different latent variables $\z$ will result in different colorizations $\x$ for the same grayscale image $L$. The latent space in these experiments is taken to be three-dimensional.

The particular architecture of the generator takes the input image $L$, augments it with $\z$ variables expanded to constant maps of the same spatial dimensions as $L$, and then applies the ResNet type architecture~\cite{He16, Johnson16} that computes $\x$ (i.e.\ the \textit{ab}-channels). The encoder architecture is a convolutional network that maps the concatenation of $L$ and $\x$ (essentially, an image in the Lab-space) to the latent space. The divergence measure is the same as in the unconditional AGE experiments and is computed ``unconditionally'' (i.e.\ each minibatch passed through the encoder combines multiple images with different $L$).

We perform the colorization experiments on Stanford Cars dataset~\cite{Krause13} with 16,000 training images of 196 car models, since cars have inherently ambiguous colors and hence their colorization is particularly prone to the regression-to-mean effect. The images were downsampled to $64{\times}64$. 

We present colorization results in \fig{cars}. Crucially, AGE generator is often able to produce plausible and diverse colorizations for different latent vector inputs. 
As we wanted to enable pix2pix GAN-based system of \cite{Isola16} to produce diverse colorizations, we augmented the input to their generator architecture with three constant-valued maps (same as in our method). We however found that their system effectively learns to ignore such input augmentation and the diversity of colorizations was very low (\fig{cars}a).

To demonstrate the meaningfulness of the latent space learned by the conditional AGE training, we also demonstrate the color transfer examples, where the latent vector $\z_1 = \eofx{\x_1|L_1}$ obtained by encoding the image $[x_1,L_1]$ is then used to colorize the grayscale image $L_2$, i.e.\ $\x_2 = \gofz{\z_1|L_2}$ (\fig{cars}b).

\begin{figure}
    \centering
    \begin{subfigure}[b]{0.79\linewidth}
    \centering
        \includegraphics[width=1\linewidth]{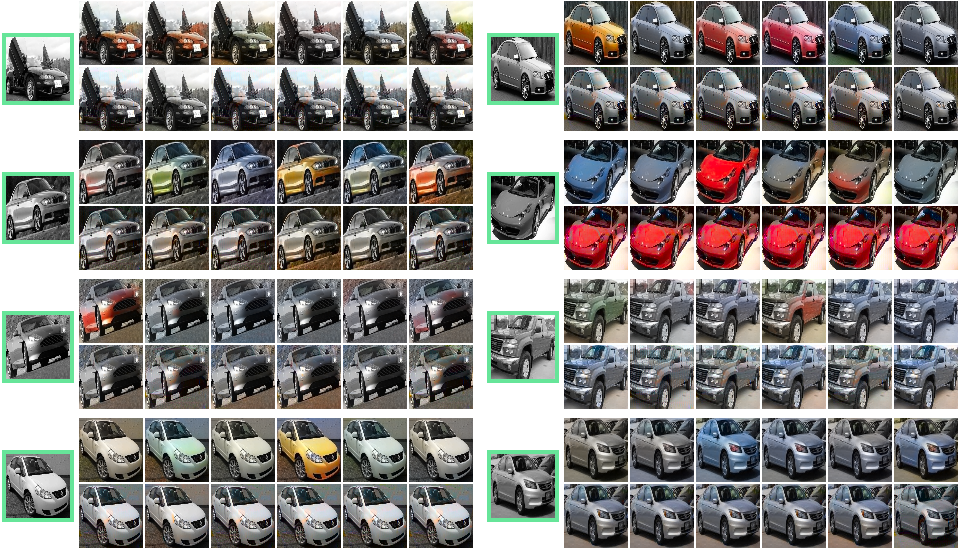}
        \caption{Colorizations -- AGE network (top rows) vs. pix2pix \cite{Isola16} (bottom rows)}
    \end{subfigure}
    \begin{subfigure}[b]{0.20\linewidth}
    \centering
        \includegraphics[width=0.99\linewidth]{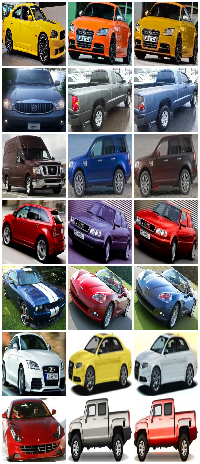}
        \caption{Color transfer}
    \end{subfigure}  
    \caption{(a) Each pane shows colorizations of the input grayscale image (emphasized) using conditional AGE networks (top rows) and pix2pix~\cite{Isola16} with added noise maps (bottom rows). AGE networks produce diverse colorizations, which are hard to obtain using pix2pix. (b) In each row we show the result of color transfer using the conditional AGE network. The color scheme from the first image is transferred onto the second image.}
    \label{fig:cars} 
\end{figure}

\section{Conclusion}\label{s:conc}

We have introduced a new approach for simultaneous learning of generation and inference networks. We have demonstrated how to set up such learning as an adversarial game between generation and inference, which has a different type of objective from traditional GAN approaches. In particular the objective of the game considers divergences between distributions rather than discrimination at the level of individual samples. As a consequence, our approach does not require training a discriminator network and enjoys relatively quick convergence.



We demonstrate that on a range of standard datasets, the generators obtained by our approach provides high-quality samples, and that the reconstructions of real data samples passed subsequently through the encoder and the generator are of better fidelity than in \cite{Dumoulin16}. We have also shown that our approach is able to generate plausible and diverse colorizations, which is not possible with the GAN-based system~\cite{Isola16}.

Our approach leaves a lot of room for further experiments. In particular, a more complex latent space distribution can be chosen as in \cite{Makhzani15}, and other divergence measures between distributions can be easily tried.


\bibliographystyle{plain}
\bibliography{refs}

\vfill
\pagebreak

\section{Appendix}
\appendix
In this supplementary material, we provide proofs  for the theorems of the main text (restating these theorems for convenience of reading). We also show additional qualitative results on several datasets.

\section{Proofs}

\newcommand{\Pbb}{{\mathbb{P}}}
\newcommand{\Q}{{\mathbb{Q}}}
\newcommand{\B}{{\mathcal{B}}}
\newcommand{\R}{{\mathbb{R}}}
\newcommand{\F}{{\mathcal{F}}}

\newtheorem{innercustomthm}{Theorem}
\newenvironment{customthm}[1]
  {\renewcommand\theinnercustomthm{#1}\innercustomthm}
  {\endinnercustomthm}
  
\newtheorem{theorem1}{Theorem}[section]
\newtheorem{lemma}[theorem1]{Lemma}

Let $X$ and $Z$ be distributions defined in the data and the latent spaces $\sX$, $\sZ$ correspondingly. We assume $X$ and $Z$ are such, that there exists an invertible almost everywhere function $e$ which transforms the latent distribution into the data one $g(Z)=X$. This assumption is weak, since for every atomless (i.e. no single point carries a positive mass) distributions $X$, $Z$ such invertible function exists. For a detailed discussion on this topic please refer to \cite{marzouk2016introduction,villani2008optimal}. Since $Z$ is up to our choice simply setting it to Gaussian distribution (for $\sZ = \R^M$) or uniform on sphere for ($\sZ = \sph)$ is good enough.


\begin{lemma}\label{lemma:all_e}
Let $X$ and $Y$ to be two distributions defined in the same space. 
The distributions are equal $X = Y$ if and only if $e(X) = e(Y)$ holds for for any measurable function $e: \sX \rightarrow \sZ$. 
\end{lemma}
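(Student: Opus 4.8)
The plan is to dispatch the two directions separately; almost all of the (modest) content lies in the reverse direction, that $e(X)=e(Y)$ for every measurable $e$ forces $X=Y$.

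The forward implication is immediate from the definition of a pushforward. If $X=Y$, then for any measurable $e:\sX\to\sZ$ and any measurable $B\subseteq\sZ$ we have $\Pbb_{\x\sim X}(e(\x)\in B)=\Pbb_{\x\sim Y}(e(\x)\in B)$, since both sides equal the integral of $\mathds{1}_{e^{-1}(B)}$ against the common law. Hence $e(X)=e(Y)$.

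For the converse I would argue directly: assuming $e(X)=e(Y)$ for every measurable $e$, I will show $X(A)=Y(A)$ for every measurable $A\subseteq\sX$, which characterizes $X=Y$. The key step is, for a fixed $A$, to use a convenient \emph{two-valued} measurable probe. Since $\sZ$ (here $\sph$ or $\R^M$, and in general any space with more than one point) contains two distinct points $z_0\ne z_1$, define $e_A(\x)=z_1$ for $\x\in A$ and $e_A(\x)=z_0$ otherwise; this is measurable because $A$ is. Then $e_A(X)$ is the law on $\{z_0,z_1\}$ assigning mass $X(A)$ to $z_1$, and likewise $e_A(Y)$ assigns mass $Y(A)$ to $z_1$. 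The hypothesis $e_A(X)=e_A(Y)$, evaluated on the singleton $\{z_1\}$, yields $X(A)=Y(A)$; as $A$ ranges over the $\sigma$-algebra, $X=Y$ follows.

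The only point that needs a moment's care is the standing assumption that $\sZ$ has at least two points, so that the separating family $\{e_A\}$ of indicator-type maps genuinely takes values in $\sZ$ — which holds in every setting the paper considers. Beyond that, the lemma is just the standard fact that a probability measure is determined by its values on measurable sets, so I do not anticipate a real obstacle.
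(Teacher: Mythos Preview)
Your proof is correct and uses essentially the same idea as the paper: probe each measurable set with an indicator-type map into $\sZ$ and read off the measure of that set from the pushforward. The only cosmetic difference is that the paper phrases the converse as a contradiction (pick $B$ with $\Pbb_X(B)\neq\Pbb_Y(B)$, build $e$ with $e^{-1}(C)=B$), whereas you argue directly over all $A$; the content is identical.
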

 \begin{proof}
It is obvious, that if $X=Y$ then $e(X)=e(Y)$ for any measurable function $e$. 


Now let $e(X)=e(Y)$ for any measurable $e$. To show that $X=Y$ we will assume converse: $X \neq Y$. Then there exists a set $B \in \mathcal{F}_X$, such that $0 < \Pbb_X(B) \neq \Pbb_Y(B)$ and a function $e$, such that corresponding set $C=e(B)$ has $B$ as its preimage $B = e^{-1}(C)$. Then we have $\Pbb_{X}(B) = \Pbb_{e(X)}(C) = \Pbb_{e(Y)}(C) = \Pbb_Y(B)$, which contradicts with the previous assumption. 
\end{proof}

\begin{lemma}\label{lemma:same_value}
Let $(\g',\e')$ and $(\g^*,\e^*)$ to be two different Nash equilibria in a game $\min_\g \max_\e V(\g,\e)$. Then $V(\g,\e) = V(\g', \e')$.  
\end{lemma}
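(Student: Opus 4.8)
The plan is to invoke the defining inequalities of a saddle point (Nash equilibrium) of a zero–sum game and chain them, using nothing about the particular form of $V$. Recall that $(\g^*,\e^*)$ being a Nash equilibrium of $\min_\g\max_\e V(\g,\e)$ means precisely that $\e^*$ is a best response to $\g^*$ and $\g^*$ is a best response to $\e^*$, i.e.
$$V(\g^*,\e)\ \le\ V(\g^*,\e^*)\ \le\ V(\g,\e^*)\qquad\text{for all }\g,\e .$$
The same double inequality holds with $(\g',\e')$ in place of $(\g^*,\e^*)$. (Note the conclusion $V(\g^*,\e^*)=V(\g',\e')$ is what is meant by the displayed equality in the statement, and the hypothesis that the two equilibria are \emph{different} is immaterial.)

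First I would substitute $\g=\g'$ into the right half of the $(\g^*,\e^*)$–inequality to obtain $V(\g^*,\e^*)\le V(\g',\e^*)$, and substitute $\e=\e^*$ into the left half of the $(\g',\e')$–inequality to obtain $V(\g',\e^*)\le V(\g',\e')$; chaining these gives $V(\g^*,\e^*)\le V(\g',\e')$. Then I would run the identical argument with the two equilibria swapped: substituting $\g=\g^*$ into the right half of the $(\g',\e')$–inequality and $\e=\e'$ into the left half of the $(\g^*,\e^*)$–inequality yields $V(\g',\e')\le V(\g^*,\e^*)$. Combining the two bounds gives $V(\g^*,\e^*)=V(\g',\e')$, as claimed.

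There is essentially no obstacle here — this is the classical fact that the value of a zero–sum game is well defined. The only thing to be careful about is the orientation of the saddle-point inequalities (outer $\min$ over $\g$, inner $\max$ over $\e$), so that the two substitution steps combine in the same direction; I would state the double inequality explicitly at the outset to make this unambiguous. Since the argument never touches the structure of $V$, it applies verbatim to both $V_1$ and $V_2$ from \eq{game1} and \eq{game2}, which is the use to which the lemma is put.
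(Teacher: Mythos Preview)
Your argument is correct and is exactly the classical saddle-point interchange proof. The paper itself gives no argument at all for this lemma --- it simply cites Chapter~2 of Owen's \emph{Game Theory} --- so your write-up is strictly more self-contained while being the standard proof one would find there.
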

 
\begin{proof} See chapter 2 of \cite{owen1982game}.
\end{proof}

\begin{customthm}{1}
For a game
\begin{equation}\label{eq:appendix_game1}
\max_\e \min_\g V_1(\g,\e) = \Delta(\,\Pegz \| \Pex\,)
\end{equation}
$(\g^*,\e^*)$ is a saddle point of \eq{appendix_game1} if and only if $\g^*$ is such that $\g^*(Z)=X$.  
\end{customthm}

\begin{proof}
First note that $V_1(\g,\e) \geq 0$. Consider $\g$ such that $\Pgz=\Px$, then for any $\e$: $V_1(\g,\e)= 0$. We conclude that $(\g,\e)$ is a saddle point since $V_1(\g,\e)=0$ is a maximum over $\e$ and minimum over $\g$. 

Using lemma \ref{lemma:same_value} for saddle point $(\g^*, \e^*)$:  $V_1(\g^*, \e^*)= 0 = \max_e V_1(\g^*, \e)$, which is only possible if for all $\e$: $V_1(\g^*, \e)=0$ from which immediately follows $\g(Z)=X$ by lemma \ref{lemma:all_e}.
\end{proof}

\begin{lemma}\label{lemma:inverse}
Let function $e: \sX \rightarrow \sZ$ be $X$-almost everywhere invertible, i.e. $\exists e^{-1}: \Pbb_X(\{\x \neq e^{-1}(e(\x))\}) = 0$. Then if for a mapping $g: \sZ \rightarrow \sX$ holds $e(g(Z)) = e(X)$, then $g(Z) = X$.
\end{lemma}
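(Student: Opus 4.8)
The plan is to push the hypothesis $e(g(Z)) = e(X)$ through the fixed measurable map $e^{-1}$. Since $e^{-1}$ is measurable, it carries equal distributions to equal distributions, so as distributions on $\sX$ we have $e^{-1}(e(g(Z))) = e^{-1}(e(X))$. It then remains only to identify the two sides: the right-hand side should collapse to $X$, and the left-hand side should collapse to $g(Z)$, after which the claim $g(Z)=X$ follows by transitivity.

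Identifying the right-hand side is a direct consequence of the hypothesis. Because $e$ is $X$-almost everywhere invertible, $\Pbb_X(\{\x \neq e^{-1}(e(\x))\}) = 0$, so for $\x \sim X$ the random variable $e^{-1}(e(\x))$ equals $\x$ almost surely, and therefore $e^{-1}(e(X)) = X$ as distributions. Combined with the first step, this already yields $e^{-1}(e(g(Z))) = X$.

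Identifying the left-hand side as $g(Z)$ is the step I expect to be the main obstacle, since the relation $e^{-1}\circ e = \mathrm{id}$ is guaranteed only $X$-almost everywhere, whereas here it must be applied to the distribution $g(Z)$. I would handle this by reading $e(g(Z)) = e(X)$ as the statement that $\Pbb_{g(Z)}$ and $\Pbb_X$ coincide on the sub-$\sigma$-algebra $e^{-1}(\mathcal{F}_{\sZ})$ generated by $e$, and then using the $X$-a.e. invertibility to see that this sub-$\sigma$-algebra agrees with $\mathcal{F}_{\sX}$ modulo $\Pbb_X$-null sets; the clean way to finish is to appeal to the standing assumption of this section that $e$ may be taken invertible almost everywhere in the strong sense that $e^{-1}\circ e$ differs from the identity only on a set negligible for every distribution in play — in particular for $g(Z)$ — which gives $e^{-1}(e(g(Z))) = g(Z)$ for the same reason as in the previous paragraph. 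Chaining $g(Z) = e^{-1}(e(g(Z))) = e^{-1}(e(X)) = X$ then completes the proof; the single point genuinely worth spelling out in the write-up is exactly this transfer of the a.e.-invertibility of $e$ from the distribution $X$ to the distribution $g(Z)$.
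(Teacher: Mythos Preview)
Your approach is essentially the paper's: both push the equality $e(g(Z))=e(X)$ through $e^{-1}$ and then invoke the $X$-a.e.\ identity $e^{-1}\circ e=\mathrm{id}$, the paper doing this set-by-set via the chain $\Pbb_X(A)=\Pbb_X(e^{-1}(e(A)))=\Pbb_{e(X)}(e(A))=\Pbb_{e(g(Z))}(e(A))=\Pbb_{g(Z)}(A)$ while you phrase it at the level of pushforward distributions. The step you single out as delicate---transferring the a.e.\ invertibility from $X$ to $g(Z)$---is precisely the paper's final equality $\Pbb_{e(g(Z))}(e(A))=\Pbb_{g(Z)}(A)$, which it writes without comment, so your argument is at least as complete as the original.
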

\begin{proof}
From definition of $X$-almost everywhere invertibility follows $\Pbb_X(A) = \Pbb_X(e^{-1}(e(A)))$ for any set $A \in \mathcal{F}_X$. Then:
\begin{gather*}
\Pbb_X(A) =  \Pbb_X(e^{-1}(e(A))) =  \Pbb_{e(X)}(e(A)) = \\ =  \Pbb_{e(g(Z))}(e(A)) = \Pbb_{g(Z)}(A).
\end{gather*}
Comparing the expressions on the sides we conclude $g(Z) = X$. 


\end{proof}

\begin{customthm}{2}
Let $Y$ to be any fixed distribution in the latent space. Consider a game
\begin{equation}\label{eq:appendix_game2}
    \max_e \min_g V_2(g,e) = \Delta(e(g(Z)) \| Y) - \Delta(e(X) \| Y)\, .
\end{equation}

If the pair $(g^*,e^*)$ is a Nash equilibrium of game \eq{appendix_game2} then $g^*(Z) \sim X$. Conversely, if the fake and real distributions are aligned $g^*(Z) \sim X$ then $(g^*,e^*)$ is a saddle point for \textit{some} $e^*$. 
\end{customthm}

\begin{proof} 
\
\begin{itemize}
    \item 
    
    As for a generator which aligns distributions $g(Z) = X$: $V_2(g,e) = 0$ for any $e$ we conclude by \ref{lemma:same_value} that the optimal game value is $V_2(g^*,e^*) = 0$. For an optimal pair $(g^*, e^*)$ and arbitrary $e'$ from the definition of equilibrium: 
    \begin{equation}\label{eq:ineq}
    \begin{array}{c}  
    0 = \Delta(e^*(g^*(Z)) \| Y) - \Delta(e^*(X) \| Y ) \geq \\ \geq \Delta(e'(g^*(Z))\| Y) - \Delta(e'(X)\| Y)\, .
    \end{array}
    \end{equation}
    For invertible almost everywhere encoder $e'$ such that $\Delta (e'(X) \| Y) = 0$ the first term is zero $\Delta (e'(g^*(Z)) \| Y) = 0$ since inequality \eq{ineq} and then $e'(g^*(Z)) = e'(X) = Y$. Using result of the lemma \ref{lemma:inverse} we conclude, that $g^*(Z) = X$.   
    

    \item If $g^*(Z) = X$ then $\forall e: e(g^*(Z)) = e(X)$ and $V_2(g^*, e^*) = V_2(g^*, e) = 0 = \max_{e'} V_2(g^*, e')$. 
    
    The corresponding optimal encoder $e^*$ is such that ${g^* \in \arg\min_g \Delta (e^*(g(Z))\| Y)}$.
\end{itemize}
\end{proof}

Note that not for every optimal encoder $e^*$ the distributions $e^*(X)$ and $e^*(g^*(Z))$ are aligned with $Y$. For example if $e^*$ collapses $\sX$ into two points then for any distribution $X$: $e^*(X) = e^*(g^*(Z)) = Bernoulli(p)$. For the optimal generator $g^*$ the parameter $p$ is such, that for all other generators $g'$ such that $e^*(g'(Z)) \sim Bernoulli(p')$: $\Delta(e^*(g^*(Z))\| Y) \leq \Delta(e^*(g'(Z))\| Y)$.


\begin{customthm}{3}
Let the two distributions $W$ and $Q$ be aligned by the mapping $f$ (i.e.\ $f(W) = Q$) and let $\E_{\w \sim W} \| \w - h \left(\strut f(\w)\right) \|^2 = 0$. Then, for $\w \sim W$ and $\q \sim Q$, we have $\w = h(f(\w))$ and $\q = f(h(\q))$ almost certainly, i.e.\ the mappings $f$ and $h$ invert each other almost everywhere on the supports of $W$ and $Q$. More, $Q$ is aligned with $W$ by $h$: $h(Q) = W$. 
\end{customthm}
\begin{proof}
Since $\E_{\w \sim W} \| \w - h \left(\strut f(\w)\right) \|^2 = 0$,  we have
$\w = h(f(\w))$ almost certainly for $\w \sim W$ . Using this and the fact that $f(\w) \sim Q$ for $\w \sim W$ we derive:   
\begin{gather*}
\E_{\q \sim Q} \| \q - f \left(\strut h(\q)\right) \|^2 = \E_{\w \sim W} \| f(\w) - f \left(\strut h(f(\w))\right) \|^2 = \\ = \E_{\w \sim W} \| f(\w) - f (\w) \|^2 = 0\,.
\end{gather*}
Thus $\q = f(h(\q))$ almost certainly for $\q \sim Q$.

To show alignment $h(Q) = W$ first recall the definition of alignment. Distributions are aligned $f(W) = Q$ iff $\forall \bar Q \in \mathcal{F}_Q$: $\Pbb_Q(\bar Q) = \Pbb_W(f^{-1}(\bar Q))$. Then $\forall \bar W \in \mathcal{F}_W$ we have
\begin{gather*}
\Pbb_W(\bar W) = \Pbb_W(h(f(\bar W))) =  \Pbb_W(f^{-1}(f(\bar W)))  = \\ = \Pbb_Q(f(\bar W)) =  \Pbb_Q(h^{-1}(\bar W))\,.
\end{gather*}
Comparing the expressions on the sides we conclude $h(Q) = W$.
\end{proof}

\section{Additional qualitative results.}

In the figures, we present additional qualitative results and comparisons for various image datasets. See figure captions for explanations.
\FloatBarrier

\begin{figure*}
    \centering
    \begin{subfigure}[b]{0.4\linewidth}
        \includegraphics[width=\linewidth]{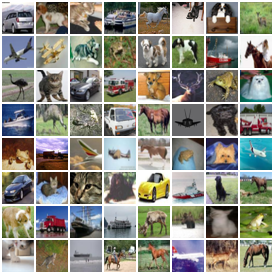}
        \caption{Real}
    \end{subfigure}\qquad
    \begin{subfigure}[b]{0.4\linewidth}
        \includegraphics[width=\linewidth]{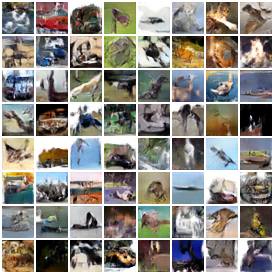}
        \caption{DCGAN \cite{Radford15}}
    \end{subfigure}\vspace{2mm}\\
    \begin{subfigure}[b]{0.4\linewidth}
        \includegraphics[width=\linewidth]{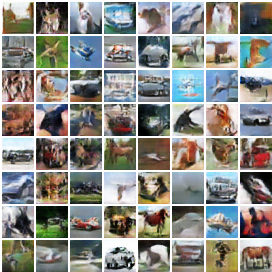}
        \caption{AGE (distribution alignment)}
    \end{subfigure}\qquad
    \begin{subfigure}[b]{0.4\linewidth}
        \includegraphics[width=\linewidth]{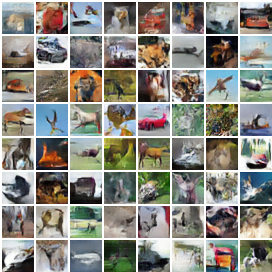}
        \caption{AGE (full)}
    \end{subfigure}
    \caption{We compare CIFAR-10 samples from DCGAN~\cite{Radford15}~(b) to the samples generated using our ablated model trained without reconstruction terms in (c) using distribution alignment only. The model, trained with the reconstruction terms is still able to produce diverse samples~(d), but also allows inference (\fig{cifar10_inference}). }\label{fig:cifar10}
\end{figure*}

\begin{figure}
    \centering
    \begin{subfigure}[b]{0.4\linewidth}
        \includegraphics[width=\linewidth]{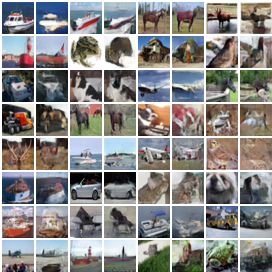}
        \caption{[Real, AGE network]}
    \end{subfigure}\qquad
    \begin{subfigure}[b]{0.4\linewidth}
        \includegraphics[width=\linewidth]{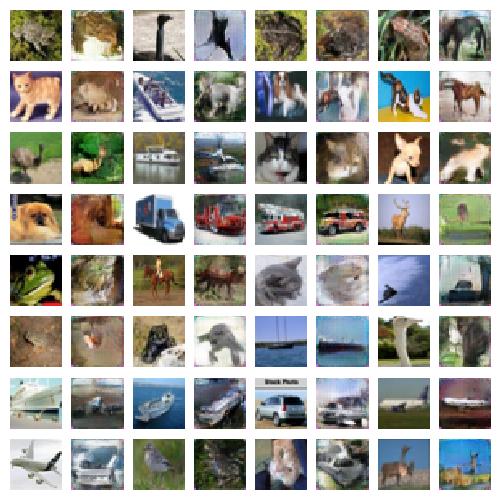}
        \caption{[Real, ALI \cite{Dumoulin16}]}
    \end{subfigure}
    \caption{Comparison in reconstruction quality to ALI~\cite{Dumoulin16} for the CIFAR-10 dataset. For both figures real examples are shown in odd columns and their reconstructions are shown in even columns. The real examples come from test set and were never observed by the model during training. }\label{fig:cifar10_inference}
\end{figure}


\begin{figure}
    \centering
    \begin{subfigure}[b]{0.48\linewidth}
    \centering
        \includegraphics[width=1\linewidth]{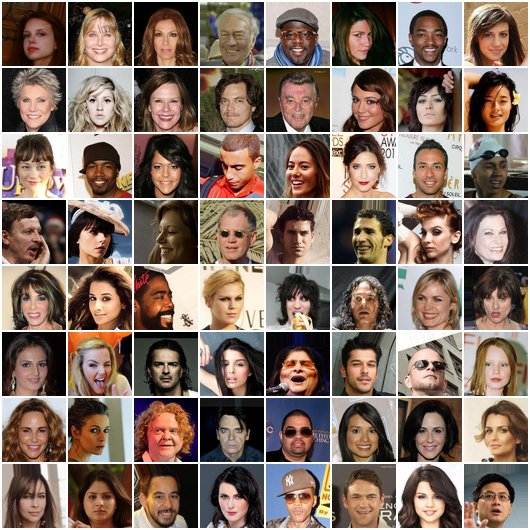}
        \caption{Real images}
    \end{subfigure} \quad
    \begin{subfigure}[b]{0.48\linewidth}
    \centering
        \includegraphics[width=1\linewidth]{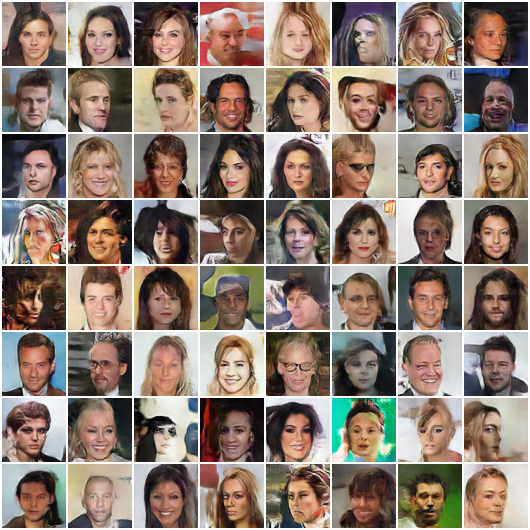}
        \caption{AGE samples}
    \end{subfigure}

    \caption{Real examples (a) and samples (b) from our model trained on CelebA dataset.}\label{fig:celeba_samples} 
\end{figure}

\begin{figure}
    \centering
    \includegraphics[width=0.7\linewidth]{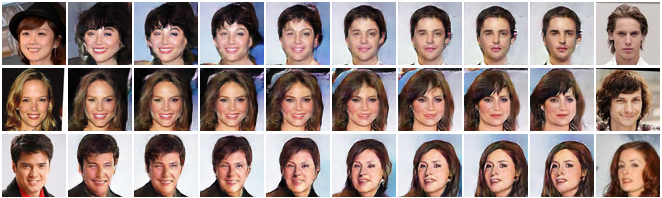}
  \caption{Latent space interpolation between two images from CelebA dataset. The original images are presented on the two sides.}\label{fig:latent}
\end{figure}

\begin{figure}
    \centering
        \includegraphics[width=0.6\linewidth]{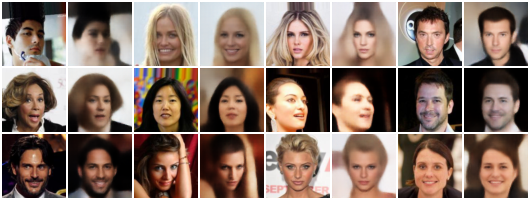}
    \caption{In all experiments except this one, we do not use data space reconstruction loss in the objective of the generator. These figure demonstrates the degradation occurring when this term is added. Odd columns correspond to real images and even to reconstructions.}\label{fig:l2loss}
\end{figure}

\begin{figure}
    \centering
        \includegraphics[width=0.7\linewidth]{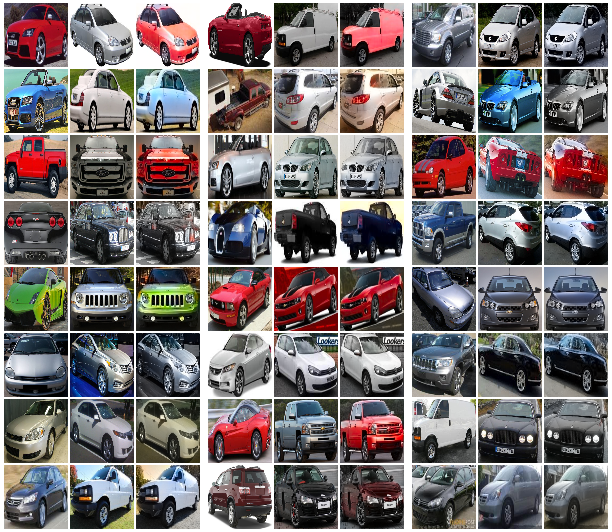}
    \caption{More color transfer results: in each triplet the color scheme is transferred from the first image onto the second image using the conditional AGE model.}
\end{figure}
\end{document}